\newcommand{\mytilde}{\raise.17ex\hbox{$\scriptstyle\mathtt{\sim}$}}
\newcommand{\argmin}{\mathop{\mathrm{argmin}}}
\algnewcommand\algorithmicinput{\textbf{Input:}}
\algnewcommand\INPUT{\item[\algorithmicinput]}
\algnewcommand\algorithmicoutput{\textbf{Output:}}
\algnewcommand\OUTPUT{\item[\algorithmicoutput]}
\newtheorem{theorem}{Theorem}
\newtheorem{lemma}[theorem]{Lemma}
\newcommand{\wl}{W_{\textsc{\textsc{\relsize{-2}{\textsl{LHM}}}}}}
\newcommand{\wkl}{W_{\textsc{\textsc{\relsize{-2}{\textsl{KHHM}}}}}}
\ifcvprfinal\pagestyle{empty}\fi
\begin{document}

\title{Latent Hinge-Minimax Risk Minimization for
Inference from a Small Number of Training Samples}

\author{Dolev Raviv\\
University of Haifa\\
{\tt\small dolev.raviv@gmail.com}
\and
Margarita Osadchy\\
University of Haifa\\
{\tt\small rita@cs.haifa.ac.il}
}

\maketitle

\begin{abstract}
Deep Learning (DL) methods show very good performance when trained on large, balanced data sets. However, many practical problems involve imbalanced data sets, or/and classes with a small number of training samples. The performance of DL methods as well as more traditional classifiers drops significantly in such settings. Most of the existing solutions for imbalanced problems focus on customizing the data for training. A more principled solution is to use mixed \emph{Hinge-Minimax risk}~\cite{osadchy2015k} specifically designed  to solve binary problems with imbalanced training sets. Here we propose a Latent Hinge Minimax (LHM) risk and a training algorithm that generalizes this paradigm to an ensemble of hyperplanes that can form arbitrary complex, piecewise linear boundaries. To extract good features, we combine LHM model with CNN via transfer learning. To solve multi-class problem we map pre-trained category-specific LHM classifiers to a multi-class neural network and adjust the weights with very fast tuning. LHM classifier enables the use of unlabeled data in its training and the mapping allows for multi-class inference, resulting in a classifier that performs better than alternatives when trained on a small number of training samples.

\end{abstract}

\section{Introduction}

Many real binary classification problems involve imbalanced classes, for example object detection in vision and fraud detection in security.  In such problems it is easy to collect background data, while data representing the target class is rare or hard (expensive) to obtain. The majority of existing powerful classifiers (e.g., SVM, Neural Networks, including deep ones) assume balanced training sets and when trained on imbalanced sets show degraded classification performance.

Deep Neural Networks have recently shown very impressive performance in large-scale multi-class problems \cite{krizhevsky2012imagenet, taigman2014deepface, SimonyanZ14a, SzegedyLJSRAEVR15}. However, these models require very large number of \textbf{labeled} training samples and their performance drops rapidly when the training set size gets smaller. Note that the requirement of large labeled sets is expensive in terms of data collection and training time. In practice, many learning problems require rapid inference from small amounts of data.

The aim of this work is to develop powerful and fast classifiers that improve in both tasks: 1) training in imbalanced setting that involve a small number of positive training samples and a large number of negative data points; 2) multi-class problems with a small number of labeled samples.

We follow the paradigm introduced in~\cite{osadchy2012hybrid}  that combines hinge risk for the smaller class and minimax risk~\cite{lanckriet2002robust, honorio2014tight} for the larger class to address imbalanced classification problems. The mixed risk was used to train linear and kernel hybrid classifiers in~\cite{osadchy2012hybrid}.  Unfortunately, while being well understood and fast, linear classifiers do not solve all machine learning problems. Kernel methods show good classification results for highly non-linear problems, but they suffer from long running time and are not scalable to large tasks. To address these issues, \cite{osadchy2015k} derived a  hinge-minimax risk and an efficient training algorithm for intersection of $K$ positive halfspaces. Such an intersection forms a convex set which limits the applicability of the classifier.

In this work, we generalize the hinge-minimax risk for an ensemble of linear classifiers, that can form arbitrary, piece-wise linear boundaries.  We propose a training algorithm that minimizes this risk by simultaneously discovering the convex components in the positive class and building K-hyperplane models to separate each component from the negative class. The learning is done by alternating between finding the best partition of the data into hidden components and updating the model over this partition. We call our novel classifier the \emph{Latent Hinge Minimax} (LHM) classifier, as it discovers the latent structure in the data and employs the Hinge-Minimax paradigm.

We show that in imbalanced setting the proposed LHM classifier outperforms other combinations of hyperplanes, including Neural Network (NN) with an equivalent architecture (NN can be viewed as a combination of hyperplanes). The robustness of LHM to imbalanced problems can be explained by the use of the minimax risk~\cite{lanckriet2002robust, honorio2014tight}, that serves as a regularizer in training (since it utilizes the statistics of the entire class, as opposed to learning from small batches of examples).

The LHM training procedure is designed for binary problems. To apply it in a multi-class setting, we build one-against-all classifiers for all classes and combine them in a single model by mapping class specific LHM models to a multi-class NN with a matching architecture (see Section~\ref{sec:mc_nn}). We then use the cross-entropy loss to adjust the weights in the resulting \emph{LHM-NN} combination.

To solve classification problems with a small number of training examples, it was suggested (e.g.,\cite{DonahueJVHZTD14,ROSS,Littwin_2016_CVPR}) to combine a pre-trained CNN (trained on a much larger training set for a related classification problem) for feature extraction, with a classifier for the target problem. Such an approach was also referred to as \emph{transfer learning}.
If the classifier is implemented as a neural network, it enables an end-to-end training, which usually improves the results. We show that using LHM-NN in the transfer learning settings has significant benefits compared to NN, in both classification accuracy and training efficiency. The improved accuracy stems from the ability of LHM model to learn from unlabeled data. The fast convergence of the LHM-NN (just a handful of epochs) is due to a very good initialization of the upper layers with class specific LHM classifiers. Note that class specific LHM models can be trained in parallel while a distributed training of fully connected layers in NN is far from being trivial. Moreover, adding a new class to LHM-NN is fast and easy: train a classifier for the new class, map it to the corresponding LHM-NN architecture and run a very fast fine-tuning. Similarly to \cite{fromNtoN+1}, which considered the transfer learning for the $n+1$ category from a fully trained $n$-category classifier, we use only a handful of training samples for tuning it. In contrast to \cite{fromNtoN+1}, we do not restrict the new classifier to belong to the span of the previously learned $n$ classifiers. This allows us greater flexibility in adding a new, non-related class to the multi-class model.

The method proposed here is different from the one-shot learning approach~\cite{Koch2015,SantoroBBWL16}, which attempts to find a mapping between target and source examples and apply it to the examples or to the model. LHM classifier learns the target concept from its examples, leveraging from unlabeled data in modeling the background statistics.

\section{Background}
We first address the settings in which the positive labeled class is much smaller then the negative class. It was shown that hinge loss \cite{vapnik2000nature, zhang2002covering, bartlett2002rademacher, bousquet2004introduction, kakade2009complexity} is computationally appealing when there are fairly small number of training samples, thus it could be used to measure the positive class risk within imbalanced problem settings.  Alternatively, the minimax risk~\cite{lanckriet2002robust, honorio2014tight} upper bounds the distribution that generates the instances-labels examples in the world. This approach is computationally appealing when there are (infinitely) many training examples, since it only utilizes their statistical properties, such as mean and covariance. Consequently, it could be employed as the negative class risk.

In this work we derive a  mixed risk and an efficient training algorithm for a more general ensemble of  hyperplanes. Our approach builds upon the mixed risk for the intersection of K-hyperplanes~\cite{osadchy2015k} which is briefly summarized in section~\ref{KHHM}.

\subsection{K-hyperplane Hinge-Minmax Classifier} \label{KHHM}
Let $(x,y)\mytilde D$  be a joint distribution of samples $x \in \mathbb{R}^n$ and labels $ y \in \{-1,1\}$.  Let $D_{neg}$ be a marginal distribution of samples over the negative labels, and $\mu$ and $\Sigma$ be its mean and covariance respectively.  For simplicity, unless stated otherwise, for a linear classifier $w$ which predicts $y=sign(w^Tx)$, we assume that $b=0$ (or absorbed by $w$).

Let $w_j$, $j=1,..,K$ denote $K$ hyperplanes. Let $W$ be a $K\times d$ matrix with $w_j$ as its $j$th column. A K-hyperplane Hinge-Minmax classifier (KHHM) is an intersection of positive half-spaces defined by these $K$ hyperplanes.

Let $X^+ \triangleq \{x\in X: y = 1\}$, and $X^-\triangleq \{x\in X: y = -1\}$ denote the positive and negative training sets correspondingly and let $m^+$ be the size of $X^+$ and $m^-$ be the size of $X^-$.
Let $\hat{\mu}$ and $\hat{\Sigma}$ be the mean and covariance of $D_{neg}$, estimated using $X^-$. The KHHM training algorithm in~\cite{osadchy2015k} minimizes the empirical risk:
\begin{equation}\label{eq:KHHM_risk}
L(\wkl)=L^{M,-1}_{X^-}(W)+L^{H,1}_{X^+}(W)
\end{equation}
where $L^{M,-1}_{X^-}(W)=\sup_{z\sim Z(\hat{\mu},\hat{\Sigma})} \Pr(z \in Q)$ is the minimax risk over the negative labels inside the intersection $Q\triangleq \{x:{W}^Tx \geq \vec{0}$ (the zero vector)$\}$. It was shown in~\cite{osadchy2015k}  that $$\sup_{z\sim Z(\mu, \Sigma)}\Pr(W^Tz>\vec{0})=\frac{1}{1+d^2}$$
with $d^2=\mu^T\tilde{W}(\tilde{W}^T \Sigma\tilde{ W})^{-1}\tilde{W}^T\mu,$ where
$\tilde{W}$ is a sub-matrix of $W$ containing hyperplanes that intersect in a point closest to $\mu$ scaled by $\Sigma^{-1}$.

The hinge part of the risk in Eq.~\ref{eq:KHHM_risk} is defined as $L^{H}_{X^+}(W)=\sum_{x \in X^+} \ell(W;x,1)$, where $\ell(W;x,y)=\sum_j{\max{\{0,1-yw_j^Tx\}}}$ is the K-hyperplane hinge loss~\cite{osadchy2015k}.

\section{Latent Hinge-Minmax Classifier}
To accommodate classes that form non-convex or disjoint sets, we propose a new model, Latent Hinge-Minmax (LHM) classifier, and a training scheme that simultaneously discovers the convex components in the positive class and learns the K-hyperlane models separating each convex component from the negative class.

We define the LHM classifier as a union of intersections of positive half-spaces. We assume that each intersection is composed of $K$ hyperplanes: $W^i=[w^i_1,...,w^i_K]$ and there are $C$ components in the union. Let $\wl \triangleq (W^1,\dots,W^C)$ denote the LHM model. Equivalently, we can define the LHM classifier as
$$f_{\textsc{LHM}}(x,\wl)=sign(\max_{i\in \{1..C\}}\{{\min_{j\in \{1..K\}}{{w^{i}_j}}^Tx\}}).$$

\subsection{Latent Hinge-Minimax Risk}
We extend the hinge-minimax risk in Eq.~\ref{eq:KHHM_risk}, to contain multiple latent components and a hidden assignment variable. Specifically,  we define a latent variable $\varphi(x)=i, \;\; i \in \{0,\ldots,C\}$ for each sample $(x,y)\in D$. We set $\varphi(x)=0$ for all samples with the negative label.  Since the assignment of negative training samples  is constant during the training, we reduce the set of latent values to $\{1,\ldots,C\}$ for the simplicity of notation.

\noindent We define the LHM risk function as follows:
\begin{align} \label{eq:lhm_risk}
L_D(\wl;\varphi)=L^{M}_{\mu,\Sigma}(\wl;\varphi)+L^{H}_D(\wl;\varphi),
\end{align}
where
\begin{equation}\label{eq:minimax_expected}
 L^{M}_{\mu,\Sigma}(\wl;\varphi)=\Pr_{z\sim Z(\mu,\Sigma)}(z \in \bigcup_{i\in\{1..C\}}Q^i)
\end{equation}
is the minimax part of the LHM risk and
\begin{flalign}\label{eq_expected_hinge_inLHM}
&L^H_D(\wl;\varphi)= \\ &\mathbb{E}_{(x,y)\in D}\left[\sum_{i=1}^C\ell(W^i;x,y)\mathbbm{1}\left[\varphi(x)=i\right]\right]&& \nonumber
\end{flalign}
is the hinge part, where $$\ell(W;x,y) = \max_{j\in \{1..K\}}\{\max\{0,\alpha-yw_j^Tx\}\}$$
is the modified K-hyperplane hinge loss. This change is required to accommodate comparison between the different norms of the hyperplanes.

\subsection{Empirical Risk}
Each sample with positive label encounters a loss  only in a single latent component, specified by its latent variable $\varphi(x)$ as per Eq.~\ref{eq_expected_hinge_inLHM}. Thus,we define a single positive sample loss as follows,
\begin{flalign*}
&L(W^{\varphi(x)};x,1,\varphi(x))\\ \nonumber
&=\frac{1}{m^+_{\varphi(x)}} L^M_{X^{\textsc{\textsc{\relsize{-2}{\textsl{-}}}}}}(W^{\varphi(x)})+\lambda L^H_{X^{\textsc{\textsc{\relsize{-2}{\textsl{+}}}}}}(W^{\varphi(x)};x)&&
\end{flalign*}
where
\small $L^M_{X^\textsc{-}}(W^{\varphi(x)})= {\sup_{z\sim  Z(\hat{\mu},\hat{\Sigma})} \Pr(z \in Q^i)}$
\normalsize is constant for all positive examples with the same assignment (the mean and covariance are estimated from $X^-$) and $L^H_{X^\textsc{+}}(W^{\varphi(x)};x) = \ell(W^{\varphi(x)};x,1).$

Let $X^i \triangleq \{x\in X^\textsc{+} : \varphi(x) = i\}$ define a subset of $X^+$. The empirical risk of a latent component $i$ aggregates the sample loss over all samples in $X^i$:
\begin{flalign} \label{eq:component_loss}
&L(W^i)=\sum_{x\in X^i} [ \frac{1}{m^+_i} L^M_{X^\textsc{-}}(W^i) + \lambda L^H_{X^\textsc{+}}(W^i;x) ] \\ \nonumber
&= L^M_{X^\textsc{-}}(W^i)+\lambda \sum_{x\in X^i}
\left[ L^H_{X^\textsc{+}}(W^i;x) \right]&& \nonumber
\end{flalign}
Finally, we define the empirical risk of the LHM model as the sum of empirical risks of all its latent components:
\begin{flalign} \label{eq:emp_risk}
&L(\wl;\varphi)=\sum_{i=1}^C{L(W^i)} \\
\nonumber&= \sum_{i=1}^C \left( L^M_{X^\textsc{-}}(W^i) \right ) + \lambda \sum_{i=1}^C \left( \sum_{x\in X^i}  L^H_{X^\textsc{+}}(W^i;x)\right) &&
\end{flalign}
By summing the risk of the components in Eq.~\ref{eq:emp_risk}, we upper bounded  the expected  minimax risk in Eq.~\ref{eq:minimax_expected} with $\sum_{i=1}^C \Pr_{z\sim Z(\mu,\Sigma)}(z \in Q^i)$.

\subsection{LHM Training}
The training aims to minimize the empirical risk in Eq. \ref{eq:emp_risk} over the parameters $\wl$ and the hidden variables $\varphi$.  Similarly to latent SVM~\cite{Yu:2009}, the complexity of the optimal assignment of samples to latent components is exponential.  We propose an iterative algorithm, which reaches fast convergence and shows good results in practice. The algorithm iterates between two steps: First, given an assignment it produces a model $\wl$, second, it updates the latent variables $\varphi(x), \forall x\in X^+$ to better represent the latent structure of the data.

The \textbf{first} step updates the LHM model $\wl^t$  in iteration $t$ given the latent variables $\varphi$ from iteration $t-1$. 
Namely, for each hidden component $i=1,...,C$, we find the hyperplanes $W^i$ separating the training samples in $X^i$ from $D_{neg}$ by minimizing the empirical risk in Eq.~\ref{eq:component_loss}. This risk is minimized by the training algorithm proposed in~\cite{osadchy2015k}.

The \textbf{second} step updates the latent variable assignment, given the current $\wl^t$.
For each positive sample, it finds the best component w.r.t. the risk in Eq~ \ref{eq:emp_risk}. Specifically, the hinge risk for $x$ is simply $\ell(W^i,x,1)$. The minimax part of the assignment function for $x\notin Q^i$ should consider the probability that this point adds when it is included in the component $i$ (as shown in Figure~\ref{fig:wx_demo}, left). For $x\in Q^i$, the minimax part should consider the amount of probability released when the component shrinks as a result of change in the assignment of $x$ (as shown in Figure~\ref{fig:wx_demo}, right). The optimal assignment should take both cases into consideration for all components. We define the assignment as follows,
\begin{flalign}\label{eq:argmin}
&\varphi(x)= \\  \nonumber
&\argmin_{i\in\{1..C\}}\left[{\Pr_{z\sim Z(\hat{\mu},\hat{\Sigma})}(z \in Q^i_x) +\lambda\ell(W^i;x,1)}\right]&&
\end{flalign}
where $Q^i_x\triangleq \{x:{W^i_x}^Tx \geq \vec{0}\}$ and
\begin{flalign*}
&W^i_x \triangleq  \begin{cases} W^{def} &\mbox{if } x \in Q \\
                      W^{inf} &  \mbox{if } x \notin Q
        \end{cases}&&
\end{flalign*}
$W^{def}$ is a deflated model derived from $W^i$ by parallel translation of the hyperplane closest to $x$ such that $w_*^Tx+b_*=0$. $W^{inf}$ is an inflated model derived from $W^i$ by parallel translation of the hyperplanes for which $w_k^Tx+b_k<0$, until they intersect in $x$, namely, $w_k^Tx+b'_k=0$. The rest of the hyperplanes remain unchanged. The full training algorithm is summarized in Algorithm~\ref{alg:lhm}.
\begin{figure}
\center
  \includegraphics[width=0.8\linewidth]{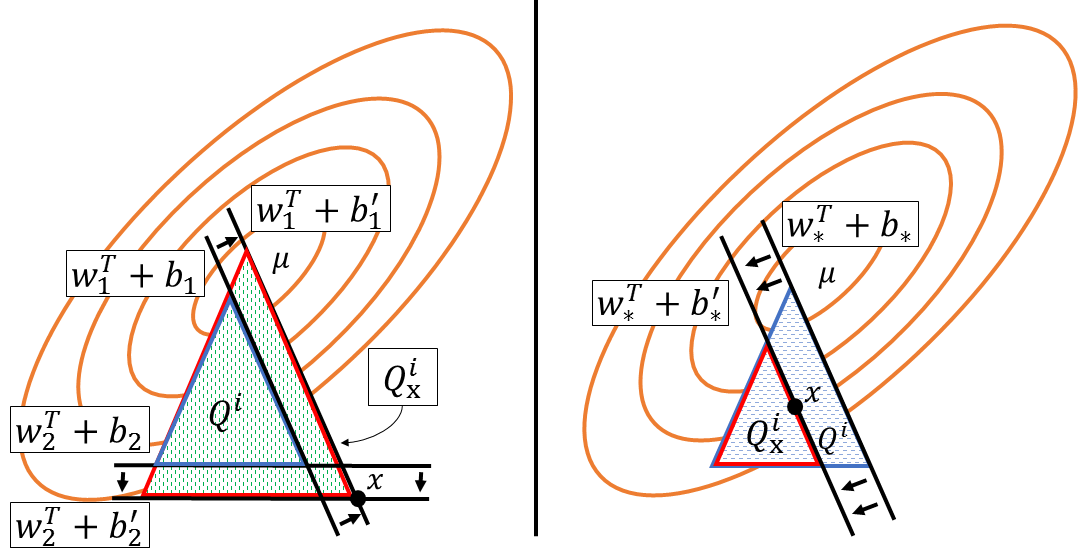}
  \caption{The orange elliptic circles represent the negative distribution $Z(\hat{\mu},\hat{\Sigma})$, the red triangles corresponds to $Q^i_x$ and  the blue ones to $Q^i$. \textbf{Left}: $w^i_1, w^i_2$ are moved to pass through $x$, causing the probability $Q^i$ to increase. \textbf{Right}:$w^i_*$ is moved to pass through $x$, causing the probability $Q^i$ to decrease.}
  \label{fig:wx_demo}
\end{figure}

\begin{algorithm}
\caption{LHM Training. \emph{KHHM-train} refers to the training of intersection of hyperplanes from~\cite{osadchy2015k}. $T$ is the threshold on the empirical risk change.}
\label{alg:lhm}
\begin{algorithmic}[1]
\INPUT{$C$, $K$, $X^+$, $X^-$,$T$}
\OUTPUT{$\wl,\varphi$}
\State $t \gets 1$
\State $L(\wl^{t=0};\varphi^{t=0}) \gets \infty$
\State $\varphi^t \gets Init(X^+,C)$  $\left\{initial\;\;\;assignment\right\}$
\Do
\ForAll{$i=1,...,C$}$\left\{Model\;\;\;Step\right\}$
\State
$W^{i,t}$=\emph{KHHM-training}($X^-$,$X^i$)
\EndFor
\ForAll{$x\in X^+$}$\left\{Assignment\;\;\;Step\right\}$
\State $\varphi^{t+1}(x)$ as defined in Eq.~\ref{eq:argmin}
\EndFor
\State $ t \gets t+1 $
\doWhile{$L(\wl^t;\varphi^t) - L(\wl^{t-1};\varphi^{t-1}) \ge T$}
\end{algorithmic}
\end{algorithm}

\begin{lemma}
Algorithm~\ref{alg:lhm} minimizes the empirical risk $L(\wl;\varphi)$.
\end{lemma}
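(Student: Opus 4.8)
The plan is to prove that the sequence of risk values $L(\wl^t;\varphi^t)$ produced by Algorithm~\ref{alg:lhm} is monotonically non-increasing and bounded below, so that it converges; this, together with the fact that the exact assignment problem is combinatorial, establishes convergence to a stationary point and justifies the stopping rule. The argument is a block-coordinate-descent / latent-SVM style analysis in which the two alternating phases, the \emph{Model Step} and the \emph{Assignment Step}, are each shown not to increase the objective in Eq.~\ref{eq:emp_risk}.

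First I would treat the Model Step. For a fixed assignment $\varphi$, the empirical risk in Eq.~\ref{eq:emp_risk} decouples into a sum of per-component risks $L(W^i)$ (Eq.~\ref{eq:component_loss}), each depending only on $W^i$ and the induced subset $X^i$. Since the components share no parameters once $\varphi$ is fixed, minimizing the total risk over $\wl$ is equivalent to minimizing $L(W^i)$ independently for every $i$, which is exactly what \emph{KHHM-training} performs by the result of~\cite{osadchy2015k}. Invoking that solver's minimization guarantee for each component yields $L(\wl^t;\varphi^t)\le L(\wl^{t-1};\varphi^t)$, so the Model Step never increases the risk.

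Next I would treat the Assignment Step, which is the delicate part. With $\wl^t$ held fixed, the minimax terms $L^M_{X^\textsc{-}}(W^i)=\Pr_{z\sim Z(\hat{\mu},\hat{\Sigma})}(z\in Q^i)$ do not depend on $\varphi$, so a naive coordinate step would only touch the hinge part. The rule in Eq.~\ref{eq:argmin} instead \emph{anticipates} the effect that moving $x$ into or out of component $i$ will have on the minimax term after that component is re-optimized, using the inflated and deflated surrogates through $Q^i_x$. The key fact to establish is that $\Pr_{z\sim Z(\hat{\mu},\hat{\Sigma})}(z\in Q^i_x)$ correctly bounds this change: the inflated model $W^{inf}$ upper-bounds the probability mass a point adds when it is pulled into the component, while the deflated model $W^{def}$ accounts for the mass released when it leaves. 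Granting this, choosing $\varphi^{t+1}(x)$ as the per-sample argmin of the combined surrogate selects, for each $x$ independently, an assignment whose total hinge-plus-minimax contribution is no larger than that under $\varphi^t$, once the subsequent Model Step realizes the bounded minimax change; summing over the samples gives the desired non-increase across the Assignment Step together with the following Model Step.

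Finally, chaining the Assignment-Step inequality $L(\wl^{t-1};\varphi^t)\le L(\wl^{t-1};\varphi^{t-1})$ with the Model-Step inequality $L(\wl^t;\varphi^t)\le L(\wl^{t-1};\varphi^t)$ gives $L(\wl^t;\varphi^t)\le L(\wl^{t-1};\varphi^{t-1})$. Because every summand of Eq.~\ref{eq:emp_risk} is nonnegative, the probabilities lying in $[0,1]$ and the modified hinge loss being nonnegative, the sequence is bounded below by $0$. A monotone, bounded-below sequence converges, so the per-iteration improvement tends to zero and the loop halts once it falls below the threshold $T$, returning $(\wl,\varphi)$. I expect the main obstacle to be exactly the Assignment Step: making precise the claim that the inflation/deflation surrogate in Eq.~\ref{eq:argmin} bounds the true change in the minimax probability realized by the next Model Step, so that the per-sample argmin genuinely does not increase the global risk. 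As is standard for latent models, the resulting guarantee is convergence to a local (stationary) solution rather than the global optimum.
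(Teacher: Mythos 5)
Your high-level architecture matches the paper's (per-component decomposition so that KHHM-training handles the Model Step, then a monotone-decrease argument), and you correctly sense that the Assignment Step must be analyzed jointly with the \emph{subsequent} Model Step. But there is a genuine gap at exactly the point you defer with ``granting this'': the claim that the inflation/deflation surrogate in Eq.~\ref{eq:argmin} controls the true change in risk after retraining is the entire content of the paper's proof, and you never establish it. The paper does so via a concrete chain of four inequalities (Eqs.~\ref{eq:9}--\ref{eq:12}): (i) since $x\in Q^{i}$, deflating component $i$ around $x$ can only release probability mass, so the deflated surrogate $L^M_{X^\textsc{-}}(W^{i,t}_x)+\lambda L^H_{X^\textsc{+}}(W^{i,t};x)$ lower-bounds the current per-sample loss $L(W^{i,t};x)$; (ii) the argmin in Eq.~\ref{eq:argmin} makes the surrogate at the new component $j$ no larger than at $i$; (iii) the naively inflated $W^{j,t}_x$ is a feasible model containing $x$, so the retrained $W^{j,t+1}$ produced by KHHM-training has minimax risk no larger than $L^M_{X^\textsc{-}}(W^{j,t}_x)$; (iv) since $x\in X^j$ in iteration $t+1$, its hinge loss does not increase after retraining --- and this last step needs the explicit side assumption, stated in the paper, that $x\in X^j$ leads to $x\in Q^{j,t+1}$, which your sketch omits. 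Chaining (i)--(iv) yields $L(W^{i,t};x)\geq L(W^{j,t+1};x)$ per sample, and summing over samples (together with your Case-1 observation for unchanged assignments) gives the composite-step decrease. Without these four steps your attempt is a plan rather than a proof.

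A second, related flaw: your concluding paragraph asserts the standalone Assignment-Step inequality $L(\wl^{t-1};\varphi^{t})\le L(\wl^{t-1};\varphi^{t-1})$, which is false in general and contradicts your own earlier (correct) observation. For a fixed model, the minimax part of Eq.~\ref{eq:emp_risk} does not depend on $\varphi$ at all, so the true risk at fixed $\wl$ varies with $\varphi$ only through the hinge sum; yet Eq.~\ref{eq:argmin} minimizes the \emph{surrogate} (hinge plus anticipated minimax change), so the selected $j$ may have strictly larger hinge loss than $i$, and the risk at the fixed model can increase across the Assignment Step. The algorithm is therefore not a plain block-coordinate descent, and the paper accordingly never uses this intermediate inequality: it compares $L(W^{i,t};x)$ directly to $L(W^{j,t+1};x)$, skipping over the intermediate point $(\wl^{t};\varphi^{t+1})$ entirely. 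Your closing remarks on boundedness below and convergence of a monotone sequence are fine once the correct composite-step inequality is in place, but they cannot rescue the invalid intermediate decomposition.
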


\begin{proof}
Since LHM risk is a sum of risks over the latent components (Eq.~\ref{eq:emp_risk}), it is minimized by minimizing the empirical risk of each component. In step (5) of the Algorithm~\ref{alg:lhm}, we train $W^{i,t}$ model for each latent component $i=1,...,C$  using the iterative algorithm  from \cite{osadchy2015k} (the convergence of which was shown in \cite{osadchy2015k}). It is easy to see that $L(W^i) = L(\wkl)$, thus step (5) of the Algorithm~\ref{alg:lhm} minimizes the component's risk in Eq.~\ref{eq:component_loss}.

It is now left to show that the assignment $\varphi^t$ in iteration $t$, will cause the reduction in the empirical risk in iteration $t+1$. Since the empirical risk is aggregated over positive samples, it is enough to prove the claim for a single sample. We consider two cases:

\noindent\textbf{1.} The  assignment of sample $x$ does not change, formally $\varphi^{t}(x)=\varphi^{t+1}(x)$. In this case $L(\wl^{t+1};\varphi^{t+1}(x))$ will only be affected by the $W^{i,t+1}$ training, thus
\begin{flalign*}
&L\left(\wl^t;\varphi^t(x)\right)\ge
L\left(\wl^{t+1};\varphi^{t+1}(x)\right)&&
\end{flalign*}

\noindent\textbf{2.} The assignment of sample $x$  is changed. Formally in iteration $t$: $\varphi^{t}(x)=i$ and in interation $t+1$ exists $j \ne i$,  such that
\begin{flalign*}
\varphi^{t+1}(x)=j=
\argmin_{k\in\{1..C\}}{L^M_{X^\textsc{-}}\left(W^{k,t}_x\right)+ \lambda L^H_{X^\textsc{+}}\left(W^{k,t};x\right)}.
\end{flalign*}
Since $x \in Q^i$, reassigning it to a different component will cause  the $\Pr_{z\sim Z(\hat{\mu},\hat{\Sigma})}(z \in Q^i_x)$ to decrease (or stay the same), thus
\begin{flalign*}
&L^M_{X^\textsc{-}}(W^{i,t}_x) - L^M_{X^\textsc{-}}(W^{i,t}) \le 0.&&
\end{flalign*}
Hence, the sample loss in component $i$ is larger than the sample loss in the deflated component:
\begin{flalign}\label{eq:9}
&L\left(W^{i,t};x\right) \geq L^M_{X^\textsc{-}}\left(W^{i,t}_x\right) + \lambda L^H_{X^\textsc{+}}\left(W^{i,t};x\right).&&
\end{flalign}
At the same time, $j$ is the optimal assignment, thus
\begin{flalign}\label{eq:10}
&L^M_{X^\textsc{-}}\left(W^{i,t}_x\right) + \lambda L^H_{X^\textsc{+}}\left(W^{i,t};x\right) \ge \\ \nonumber &L^M_{X^\textsc{-}}\left(W^{j,t}_x\right) + \lambda L^H_{X^\textsc{+}}\left(W^{j,t};x\right).&&
\end{flalign}
Since $W^{j,t}_x$ is a naive inflation of $W^{j,t}$ to include $x$, the solution  $W^{j,t+1}$, provided by KHHM training, would have lower (or same) empirical risk, thus
\begin{flalign}\label{eq:11}
&L^M_{X^\textsc{-}}\left(W^{j,t}_x\right)\geq L^M_{X^\textsc{-}}(W^{j,t+1}).&&
\end{flalign}
In iteration $t+1$, $x$ is included in $X^j$ for training  the $j$'th latent component, consequently
\begin{flalign}\label{eq:12}
&L^H_{X^\textsc{+}}\left(W^{j,t};x\right)\geq L^H_{X^\textsc{+}}(W^{j,t+1};x).&&
\end{flalign}
(as we assume that $x\in X^j$ leads to $x\in Q^{j,t+1}$).
Finally, by combining the inequalities in Eq.~\ref{eq:9}--\ref{eq:12}, we obtain:
\begin{flalign*}
&L\left(W^{i,t};x\right) \geq L\left(W^{j,t+1};x\right).&&
\end{flalign*}
\end{proof}

\section{Mapping LHM Classifier to a Neural Network}
We propose to map LHM Classifier to a Neural Network. This enables 1) end-to-end training of the CNN features and LHM classifier for imbalanced problems and 2) LHM generalization to multi-class that enables using a smaller number of labeled training samples than NN.

\subsection{Binary NN}\label{sec:binary_nn}
\begin{figure}
  \includegraphics[width=\linewidth]{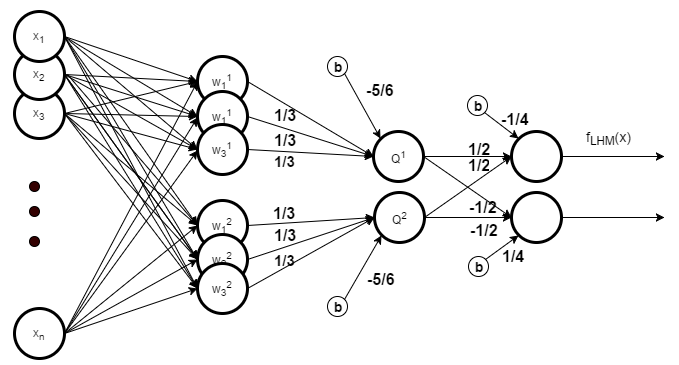}
  \caption{An example of NN equivalent to LHM.} 
  \label{fig:LHM_NN}
\end{figure}
 A union of the intersections of positive half spaces can be implemented by a NN with three hidden layers. The first fully connected hidden layer has $K\times H$ neurons, where $K$ is the number of hyperplanes in an intersection and $H$ is the number of components. The second hidden layer has $H$ nodes, connected only to the neurons associated with hyperplanes forming the corresponding intersection. The weights on these connections and the biases are fixed and mimic \textbf{AND} operation, namely, all weights of this layer are equal to $1/K$ and the biases are equal to $-1+1/(2K)$.  The last hidden layer has two neurons, which are fully connected to the previous layer with the fixed weights and biases that mimic \textbf{OR} operation, namely, the fist neuron has  weights equal to $1/H$   and the bias of $-1/(2H)$. The second output has weights equal to $-1/H$ and the bias of $1/(2H)$.  The network has two outputs.
 An example of such network for $H=2$ and $K=3$ is depicted in Figure~\ref{fig:LHM_NN}.

\subsection{Multi-Class NN}\label{sec:mc_nn}
For a multi-class setting, we suggest to train LHM model for each class using an additional unlabeled data for estimating the statistics of the negative class. We then map these models to a multi-class NN with the following architecture. The first hidden layer  is a fully connected layer with $H\times K$ neurons per class, $H\times K\times C$ neurons in total, where $C$ is the number of classes. These are  equivalent to  $H\times K\times C$ hyperplanes in the LHM model. For each hidden component, all hyperplanes in the intersection are connected to their corresponding node in the \textbf{AND} layer (as detailed in Section~\ref{sec:binary_nn}). The \textbf{AND} layer comprises $H\times C$ neurons. The next layer is a fully connected layer, comprising $C$ nodes. The weights on the connections to the $H$ components of the corresponding class are initialized with 1's, and the weights on the remaining connections are initialized with very small values from a Gaussian distribution.  The network has $C$ outputs and is trained using the cross-entropy loss.

To provide an end-to-end training, one can consider stacking the feature extraction layers of CNN (up to fully connected layers) with one of the above networks.
\begin{figure}[t]
\center
\includegraphics[width=1.4in]{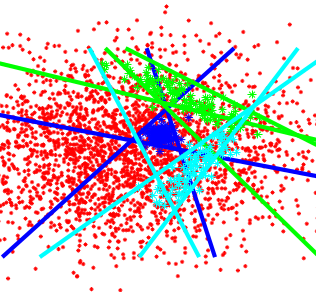}
\includegraphics[width=1.4in]{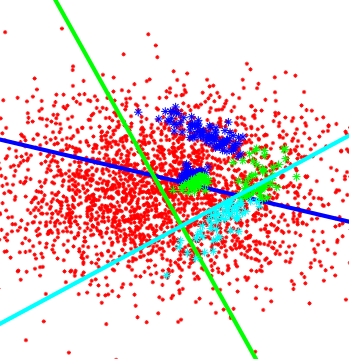}
\caption{A qualitative comparison of the latent hinge minimax classifier (on the left) to the union of LDA classifiers (on the right).}
\label{LDAvsLHM_toy}
\end{figure}

\section{Experiments}
We start by an illustrative example in 2D (Section~\ref{sec:synthetic_ex}) that shows the ability of the LHM classifier to discover the hidden components in the positive class and to separate each of them from the negative class using a K-hyperplane model.

Next, we compare LHM model to alternative ensembles of hyperplanes on the PASCAL-VOC 2007 dataset \cite{Everingham10} (Section~\ref{sec:voc_ex}), and show its advantage over those methods and its robustness to the choice of the number of latent components. In these experiments we use simple HOG features and shallow architecture.

Finally, we show (Section~\ref{sec:cifar_ex}) that LHM classifier can be combined with CNN via transfer learning.   We address two settings: 1) binary problems with imbalanced sets, 2) multi-class tasks with a small number of labeled training samples. In both cases, LHM-based models show significantly better performance than NNs. The experiments are performed on images from cifar-10 and cifar-100~\cite{cifar10} and using LeNet CNN for features extraction.
\begin{figure*}[h]
\center
\includegraphics[width=0.8\linewidth]{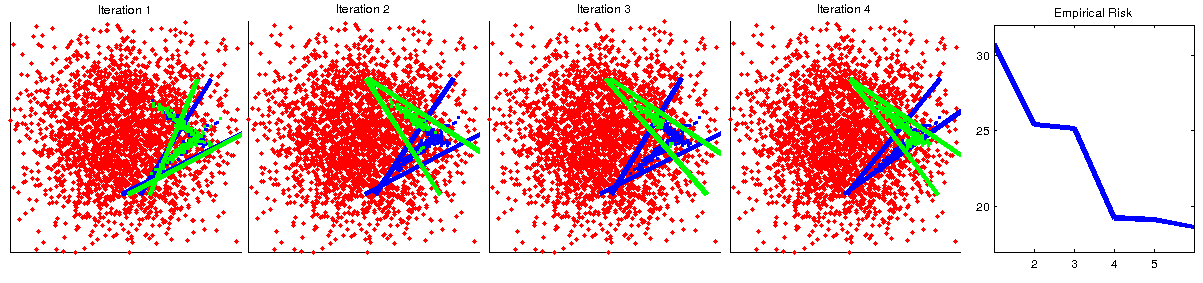}
\caption{First four iterations of the LHM training on toy example and the corresponding loss convergence. }
\label{fig:2d_example2}
\end{figure*}
\subsection{Synthetic Data}\label{sec:synthetic_ex}
A simpler alternative to the LHM model is a two-step algorithm which first finds the structure of the target class by applying some kind of unsupervised learning (e.g, k-means clustering) and then builds a model for each component. Such a simple approach was employed in~\cite{Deva} with LDA~\cite{LDA} classifier trained per cluster. Unless the clusters are very small (as in exemplar-based approach, which is time consuming~\cite{eSVM}), it relies heavily on the results of the clustering. If an initial clustering is incorrect (as in Figure \ref{LDAvsLHM_toy}, right), LDA (or any other convex classifier) cannot separate the resulting components from the background without including many false positives.  The LHM  training finds the underlying structure of the data and the model iteratively, improving both (Figure~\ref{LDAvsLHM_toy}, left). Furthermore, LHM is quite robust to the initial assignment. Figure~\ref{fig:2d_example2} shows a few iterations and the corresponding loss convergence when the  initial assignment of the positive samples to components is chosen at random. Note the LHM training discovers the underlying structure in a 3-4 iterations.

\begin{figure}[t]
  \includegraphics[width=0.8\linewidth]{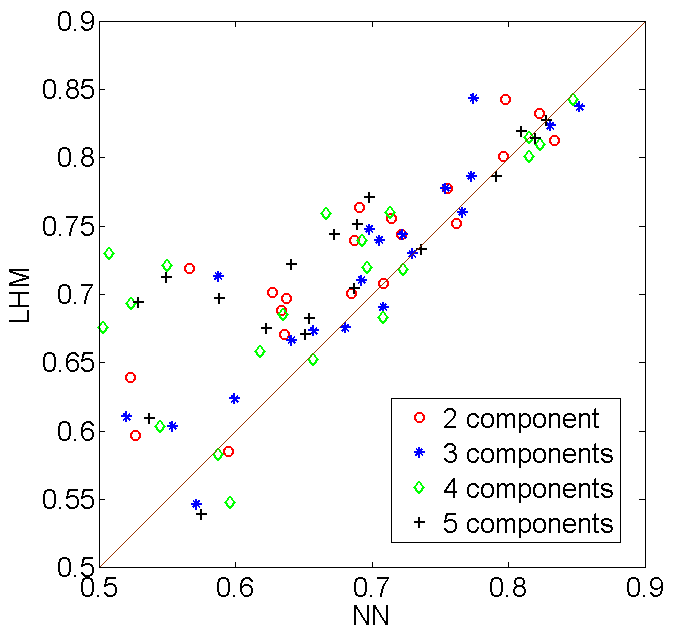}
  \caption{Comparison of the LHM classifier to the equivalent NN for a varying number of assumed hidden components (from 2 to 5) on PASCAL VOC 2007. 
  The points above the diagonal line show the advantage of LHM classifier.}\label{fig:VOC}
\end{figure}

\subsection{Ensembles of Hyperplanes}\label{sec:voc_ex}
Next, we compared the LHM classifier to alternative ensembles of linear classifiers on PASCAL VOC 2007 dataset \cite{Everingham10} using Dalal-Triggs variant of the HOG  features~\cite{HOG} with a fixed number of cells.

\noindent\textbf{LHM model:} We set the number of hyperplanes in each component to 2 and varied the number of components from 2 to 5. An initial assignment to the components was done using k-means with the Euclidian distance.

\noindent\textbf{LDA Union (as a baseline model):} We applied k-means clustering on whitened features to find the partition. We then learned an LDA classifier for each cluster in that partition. We varied the number of clusters from 2 to 5.

\noindent\textbf{NN with an architecture equivalent to LHM:} We used the model described in Section~\ref{sec:binary_nn}  with $K=2$ and $H=2,..,5$, but the weights were initialized at random.

\noindent\textbf{KHHM model~\cite{osadchy2015k}:} This is essentially an LHM model with a single component, thus it is theoretically inferior to LHM. However, we ran this experiment to test the benefits of modeling the hidden structure of the positive class. We varied the number of hyperplanes from 2 to 5.

All ensembles were trained in one-against-all manner. Similarly to~\cite{Deva,osadchy2016recognition}, we learned the background mean and covariance using bounding boxes from all classes and used them to represent the negative class in LDA union, KHHM, and LHM training.

We tested all ensemble classifiers on all windows from the test set. Table~\ref{tab:VOC} summarizes the results (\small$(1-EER)\cdot100$)) \normalsize for all tested ensembles averaged over classes and different parameters. It shows that LHM model outperforms all other classifiers. Figure \ref{fig:VOC} compares LHM to NN on 20 categories (as one-against-all binary classifiers) for varying number of hidden components. The plot shows that LHM outperforms NN independently of the number of components.

\begin{table}
\center
\begin{tabular}{|c|c|c|c|}
\hline
 LHM & Union of LDAs & NN & KHHM\\
  \hline
   71.48\%& 65.17\% &67.19\%&69.45\% \\
\hline
\end{tabular}
\caption{The table reports (1-EER)*100 averaged over 20 classes and different hidden partitions (except for KHHM) on PASCAL VOC-2007 classification task using 80-dimensional HOG features.}\label{tab:VOC}
\end{table}

\begin{figure*}
\center
  \includegraphics[width=0.37\linewidth]{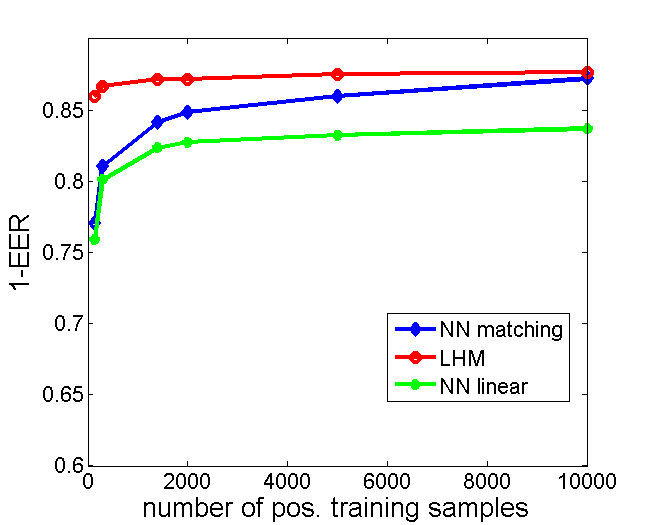}
  \includegraphics[width=0.37\linewidth]{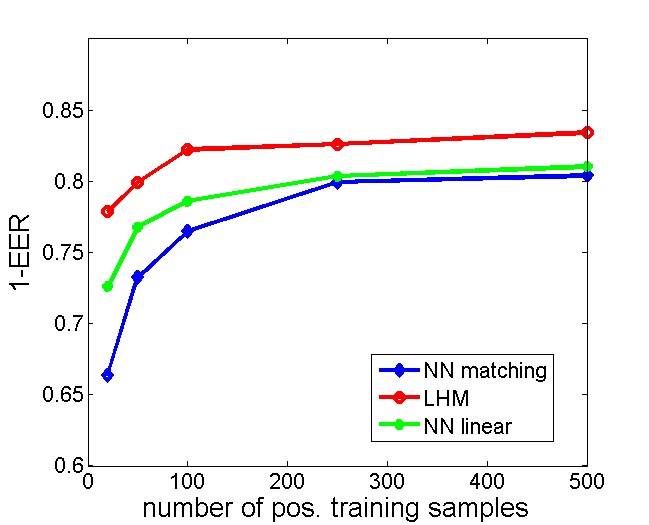}
  \caption{Binary imbalanced classification: left -- the ``best-case'' transfer learning setting, right -- the ``worst-case'' transfer learning  setting.} 
  \label{fig:binary}
\end{figure*}
\subsection{Deep Architecture}\label{sec:cifar_ex}
Next, we tested the LHM classifier on top of the pre-trained CNN feature extraction in imbalanced binary problems and in multi-class tasks with a small number of labeled examples. We explored the following transfer learning settings. The first setting refers to the \textbf{best case} scenario in which the source and the target classification tasks operate on the \emph{same} set of features but differ in the classification problem. The second setting refers to the \textbf{worst case} scenario for the transfer learning where the source and the target classification problems \emph{share very little similarity}.  The ``worst case'' scenario is very common in practice, as many classification tasks do not have a large, comprehensive training set (such as ImageNet \cite{imagenet_cvpr09} in object recognition) to be used in transfer learning. No good solution currently exists for such problems.

We used the CIFAR-10, composed of 10 categories (airplane, automobile, bird, cat, deer, dog, frog, horse, ship, and truck) as the source problem. Specifically, we trained the LeNet model implemented in MatConvNet~\cite{vedaldi15matconvnet} on CIFAR-10. Then we removed the last fully-connected layer and the soft-max and used this trimmed network as a feature extractor which converts images to a $64$-dimensional feature vectors.

For the best case transfer learning, we defined a new set of classes by coupling $i$ and $i+5$ indexes of CIFAR-10 classes. CNN trained on CIFAR-10 maps individual classes to linearly separable sub-spaces, thus using pairs of classes as a target classification problem makes it non-linear. Consequently, we get a new classification problem over the same space of features.

For the worst case transfer learning, we picked a subset of 5 classes (train, bottle, cattle, forest, and sweet peppers) from the CIFAR-100, which do not overlap (in their visual appearance) with the CIFAR-10 categories, to be the target classification task. CIFAR-10 data set is not rich enough to enable learning of features that can be used for an arbitrary category, thus we believe that such setting is especially difficult.

We tested the LHM binary and multi-class classifiers in the best and the worst case transfer learning scenarios and compared their performance to two baselines. One is an NN with a single fully connected  layer and the cross-entropy loss (NN linear) and the other is the NN with the architecture matching the LHM model (NN matching).  We repeated each experiment 50 times over different random subsets of training samples and random initialization of NN and averaged the results.

\begin{figure*}
\center
  \includegraphics[width=0.37\linewidth]{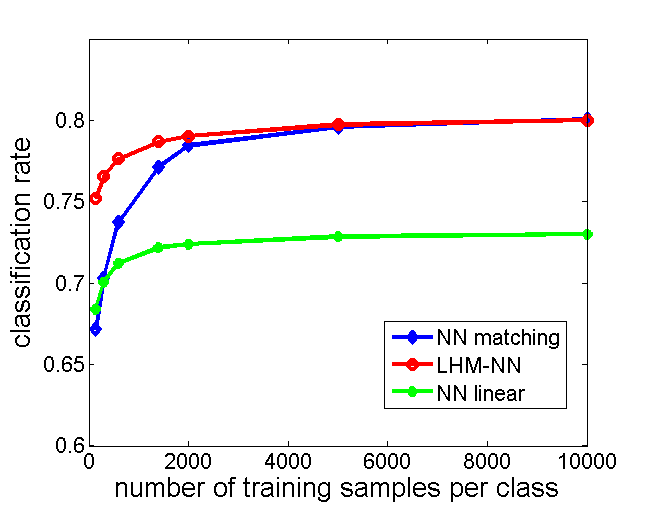}
  \includegraphics[width=0.37\linewidth]{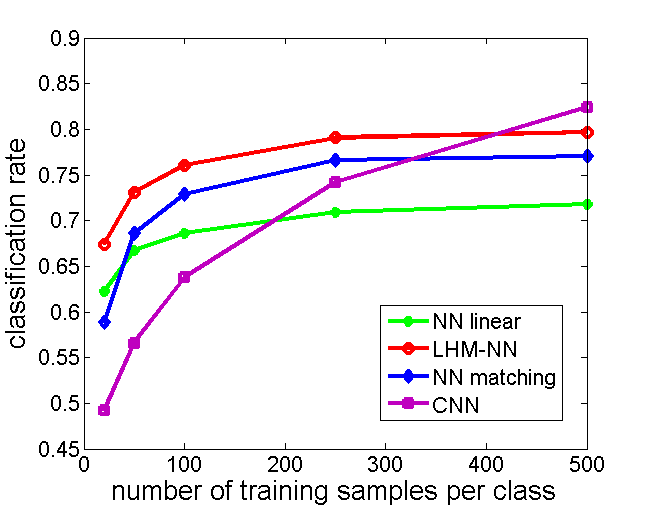}
  \caption{Multi-class classification: left -- the ``best-case'' transfer learning  setting, right -- ``worst-case'' transfer learning  setting.}
  \label{fig:mc}
\end{figure*}

\subsubsection{Binary Imbalanced Problem}
\noindent\textbf{The ``Best Case'' Transfer Learning:}
We trained binary classifiers for pairs of classes from CIFAR-10 using imbalanced training sets, in which the negative class included all samples from all other classes (40,000 examples) and the positive class included a varying number of samples (140, 300, 600, 1400, 2000, 5000-all). This resulted in imbalance ratios from 1:256 to 1:4.

LHM model was trained with 2 hidden components and 3 hyperplanes per component. The matching NN mimicked the configuration of LHM model, but the weights were allowed to change in training.  Figure~\ref{fig:binary}-left shows the 1-EER (averaged over 5 classification problem) of the LHM classifier and the two NN baselines as a function of the positive training sample size.

\noindent\textbf{The ``Worst Case'' Transfer Learning:}
Since the number of samples per class in CIFAR-100 is significantly smaller, this experiment tests the robustness to imbalanced training data and to a small number of examples. We varied the size of the positive training set between 20, 50, 100, 250, 500(all) samples and we used all 2,000 samples of other classes as the negative training set. We compared the LHM model trained with 2 hidden components and  2 hyperplanes per component to NN baselines. Figure~\ref{fig:binary}-right shows the 1-EER of the classifiers averaged over 5 classification problems as a function of the positive training set size.


\subsubsection{Multi-Class Problem}
\noindent\textbf{The ``Best Case'' Transfer Learning:}
We mapped the LHM binary classifiers trained for 5 pairs of categories to a multi-class NN as described in Section~\ref{sec:mc_nn}. We fine-tuned the weights with a very fast training (just a handful of epochs, while training from scratch requires two orders of magnitude more training epochs). Figure~\ref{fig:mc}-left shows the accuracy of the LHM models mapped to a multi-class NN (LHM-NN) with the two baseline NNs as a function of the size of the training set.

\noindent\textbf{The ``Worst Case'' Transfer Learning:}
We mapped the LHM binary classifiers trained for the 5 categories from CIFAR-100 (using CIFAR-10 features) to a multi-class NN and fine-tuned the weights with a small number of epochs.

To test the complexity of the transfer learning problem we also trained a CNN (LeNet model implemented in MatConvNet~\cite{vedaldi15matconvnet}) on the target problem. We hoped that due to the small size of the target classification problem, 500 training examples per class would  yield relatively good accuracy. Figure~\ref{fig:mc}-right compares the accuracy of LHM-NN, two baseline NNs, and CNN (trained from scratch) as a function of the training sample size. It shows that CNN trained on the target problem is indeed the best as it succeeds to learn features specific for the task, but its accuracy drops very abruptly when the number of training samples becomes smaller. This suggests that when the number of training examples is small, using transfer learning even in a such difficult setting is a better solution than training a CNN from scratch.

The results in Figures \ref{fig:binary} and \ref{fig:mc} show that the NN models either heavily overfit when the number of  training samples is small (NN matching) or they are not expressive enough when the number of training samples increases (NN linear). LHM classifiers are expressive enough to learn from a large set of examples and are more robust to overfitting when the number of examples is small.


\section{Training Efficiency}
Another advantage of LHM-NN is its training efficiency. A class-specific LHM model converges in 5-10 iterations. Its training time primarily depends on the number of positive samples and the dimension. The negative samples are used to estimate the mean and covariance of the background. The initial estimation (which involves a large number of samples) can be done only once and used for all classes. Since the probability of the negative class is evaluated inside the positive region using false positives~\cite{osadchy2015k}, the number of which drops very fast, the estimation time of the mean and covariance during the training is negligible. Training of a binary classifier per class is independent of other classes, thus their training can be done in parallel. Finally, the fine-tuning of the multi-class network after mapping is very fast, due to the initialization of all layers (using supervized learning): feature extraction layers with pre-trained CNN and classifier's layers with LHM models.

The LHM-NN is also beneficial for the problems in which classes are dynamically added or removed from the classification task. Adding a class requires training a single binary classifier and fast fine-tuning; removing a class requires only fine-tuning.

\section{Conclusions}
We proposed a novel Latent Hinge-Minimax classifier for binary problems that discovers the hidden components in the positive class and separates them from the negative class with the intersections of positive half spaces. The main advantage of this classifier is its ability to incorporate unlabeled data in training. This results in a better robustness to imbalanced problems. We showed that for multi-class tasks, class-specific LHM models can be mapped to a multi-class NN with matching architecture requiring only a few iterations of fine-tuning. Finally, the proposed LHM architecture can be integrated with CNN features via transfer learning. The entire training procedure is very efficient. Our experiments showed that such classifiers are much more robust to the number of labeled training samples than the equivalent NNs.

We plan to incorporate multi-class loss into the Hinge-Minimax paradigm and design an efficient algorithm for minimizing this risk. We also plan to tain NNs using Hinge-Minimax like loss. It would be interesting to compare the results of this model with the binary-to-multiclass mapping scheme proposed here.

{\small
\bibliographystyle{ieee}

\end{document}